\titlespacing{\section}{4pt}{4pt}{4pt} 
\title{Coordinated Replay Sample Selection for Continual Federated Learning}
\author{Jack H.~Good$^1$\Thanks{\enspace Work done while the author was an intern at Amazon},
Jimit Majmudar$^2$,
Christophe Dupuy$^2$,
Jixuan Wang$^2$,  \\
{\bf
Charith Peris$^2$,
Clement Chung$^2$,
Richard Zemel$^2$,
Rahul Gupta$^2$} \\
$^1$Carnegie Mellon University \\
$^2$Amazon Alexa AI \\
\texttt{jhgood@cs.cmu.edu} \\
\texttt{\{mjimit, dupuychr, wjixuan, perisc, chungcle, rzemel, gupra\}@amazon.com}}
\date{}
\newtheorem{theorem}{Theorem}
\begin{document}

\maketitle

\begin{abstract}
Continual Federated Learning (CFL) combines Federated Learning (FL), 
the decentralized learning of a central model on a number of client devices 
that may not communicate their data, 
and Continual Learning (CL), 
the learning of a model from a continual stream of data without keeping the entire history. 
In CL, the main challenge is \textit{forgetting} what was learned from past data. 
While replay-based algorithms that keep a small pool of past training data 
are effective to reduce forgetting, 
only simple replay sample selection strategies have been applied to CFL in prior work, and no previous work has explored coordination among clients for better sample selection.
To bridge this gap, we adapt a replay sample selection objective based on loss gradient diversity to CFL
and propose a new relaxation-based selection of samples
to optimize the objective.
Next, we propose a practical algorithm to coordinate gradient-based replay sample selection across clients without communicating private data. 
We benchmark our coordinated and uncoordinated replay sample selection algorithms 
against random sampling-based baselines
with language models trained on a large scale de-identified real-world text dataset. We show that gradient-based sample selection methods both boost performance and reduce forgetting compared to random sampling methods, with our coordination method showing gains early in the low replay size regime (when the budget for storing past data is small).
\end{abstract}

\section{Introduction}
The ubiquity of personal devices with a network connection,
such as smart phones, watches, and home devices,
offer a rich source of data for learning problems such as
language modeling or facial recognition.
The conventional approach is to collect all the data into one location and use dedicated hardware to learn a model;
however, the privacy risk associated with communicating personal data makes this approach unsuitable for many applications.
\textit{Federated learning} (FL) offers a solution by learning a central model
via distributed training across user-owned devices, without communicating any data to the central server.

In addition, the devices may produce a continual stream of data and,
due to storage constraints and/or privacy restrictions,
be able to keep only a limited amount of data at a time.
Thus \textit{continual federated learning} (CFL) has recently emerged as a prominent topic in machine learning research.
CFL incorporates methods from \textit{continual learning} (CL),
where a model is periodically fine-tuned on new data.
The main challenge for CL is \textit{catastrophic forgetting},
a phenomenon where fine-tuning on new data causes a reduction of performance on past data.
This is harmful to long-term generalization, 
especially when different time periods comprise different tasks,
or when the data distribution shifts over time or presents seasonality.

Among various methods,
\textit{episodic replay}, wherein a small, fixed-size \textit{replay buffer} of past data
is kept and used for fine-tuning along with new data,
has proven to be among the most effective strategies to reduce forgetting and improve performance of the final model
in both CL \cite{verwimp21} and CFL \cite{guo21,Dupuy2023}.
However, only basic replay sample selection strategies, including random sampling and iCaRL \cite{rebuffi17},
have been applied to CFL \cite{guo21}.
To bridge this gap, we adopt the selection objective from gradient-based sample selection (GSS) \cite{aljundi19a}, 
a more recent approach that selects replay samples based on the diversity of their gradients. We propose a new relaxation-based selection method
that results in selections closer to optimal compared to methods from prior work.

Any replay sample selection method from CL can be used for CFL by applying it independently at each client.
However, CFL presents a yet-unexplored opportunity 
for the central server to coordinate the selection of replay samples across clients,
that is, choose samples such that the union of all clients' replay buffers, rather than each individual buffer, is optimal.
The main challenge is that, to ensure privacy, the data cannot be communicated to the server,
so selection techniques from CL can not be applied directly.
Building on our relaxation-based selection approach,
we propose the first server-coordinated replay sample selection approach for CFL.
By introducing auxiliary variables that make the objective of the relaxation 
separable across clients,
we enable an alternating minimization (more generally called block coordinate descent) process
whereby the optimization alternates between the server and the clients in parallel,
all while maintaining communication volume and privacy very similar to standard FL training.

Our novel contributions are
1) a relaxation-based approach 
to select replay samples that maximize loss gradient diversity;
2) a practical algorithm
for coordinated selection of replay samples
to maximize gradient diversity jointly across many clients
without sacrificing privacy
or substantially increasing communication or computation cost;
and 3) an empirical analysis of the effect of these strategies
on performance and forgetting
on a language modeling problem using real-world voice assistant data
with heterogeneity across clients and time periods.

\section{Related work}
\label{sec:related-work}

FedAvg \cite{mcmahan17} is a standard FL algorithm
wherein the server sends an initial model to a random sample of clients, 
each client in parallel fine-tunes the model with its local data and sends it back to the server,
and the server averages their weights to get a new central model.
This is repeated for a number of rounds.
If the clients are heterogeneous (have non-i.i.d. data distributions), 
then the weight averaging results in \textit{client drift}.
As a result, convergence rates of algorithms based on FedAvg generally get worse with client heterogeneity
\cite{wang19, karimireddy20, li20, reddi20}.
Several variations of FedAvg have been proposed to address challenges such as client drift
\cite{zhao18, wang19, li20, reddi20, karimireddy20}. 
The replay sample selection strategies proposed in this paper are orthogonal to the particulars of the FL algorithm;
for our evaluation, we use standard FedAvg.

``Continual learning'' can refer to several related problems,
but in this work, we consider the problem of learning a single task without forgetting
from a continual stream of data, usually by periodic fine-tuning,
with some limitations such as hardware capacity
precluding the retention of the full history of data.
The distribution of data may shift over time.
Common approaches to reduce forgetting are to
apply regularization penalizing the difference in weights between the current model previous models \cite{kirkpatrick17};
keep a small set of historical data
and project loss gradients such that they do not increase the loss on
these historical data \cite{lopez-paz17, chaudhry18, guo20};
or keep a small set of historical data to include during training
\cite{rebuffi17, aljundi19a, aljundi19b, borsos20}.
The last approach, called \textit{episodic replay} or \textit{rehearsal},
has been shown to be especially effective to reduce forgetting
in both CL \cite{verwimp21} and CFL \cite{guo21,Dupuy2023}.
In particular, gradient-based sample selection (GSS) \cite{aljundi19a} is an episodic replay strategy
that chooses replay samples to maximize the diversity of the loss gradients.
It is shown to outperform other strategies
and is the foundation for our proposed CFL methods.

Continual federated learning (CFL)
is a setting where each client receives a continual stream of data
and federated learning is periodically applied to update a central model.
This setting faces challenges of both heterogeneity across clients, as in FL,
and heterogeneity across time steps, as in CL.
CFL works that focus on improving performance by reducing forgetting,
like this one,
include the following:
\cite{yao20} applies model regularization methods from CL to FL, 
but focuses on improving generalization of FL by reducing client drift;
\cite{guo21} proposes a general CFL framework with convergence analysis
and applies CL techniques including model regularization, generative data augmentation,
and episodic replay strategies including naive random sampling and iCaRL \cite{rebuffi17}, 
finding that episodic replay outperforms the other CL strategies by a wide margin,
with the naive method being superior;
\cite{usmanova21} uses a distillation strategy with both central and past local models as teachers
for new local models;
\cite{jiang21} uses parameter masking to preserve and reuse knowledge;
and \cite{casado20} proposes a different take on CFL using lightweight models with ensemble methods,
focusing mainly on practical limitations of low-power devices,
but also discussing applicability to single-task CL problems with distribution shift.
To the best of our knowledge, we are the first to apply gradient-based replay sample selection methods to CFL
and the first to propose a server-coordinated approach.
Other CFL works focus on FL challenges such as client interference \cite{yoon21}
or variable sampling rate, device capabilities, latency, and availability issues \cite{chen20}.

\section{Problem Formulation}
\label{sec:problem-formulation}

In FL, each client $m\in[M]$ has a set $X_m$ of samples of size $n_m=|X_m|$, and we aim to find a model $w$ that solves the optimization problem
\begin{equation}\label{eq:loss-FL}
\begin{aligned}
    \min_w \sum_{m\in[M]} \frac{n_m}{n}\ell(X_m; w)
\end{aligned}
\end{equation}
where $\ell$ indicates a client-level aggregate loss function and $n=\sum_m n_m$ is the total number of samples. 
In CFL, the samples are further split into $T$ consecutive time periods, so each client $m\in[M]$ and time period $t\in[T]$ has samples $X_{m,t}$ of size $n_{m,t}=|X_{m,t}|$, and we aim to find a model $w$ that minimizes
\begin{equation}
\label{eq:loss-CFL}
\begin{aligned}
    \min_w \sum_{m,t} \frac{n_{m,t}}{n}\ell(X_{m,t}; w)
\end{aligned}
\end{equation}
with $n=\sum_{m,t} n_{m,t}$ the total number of samples.
Since data is generated sequentially and that user-owned devices typically have limited storage, at time period $t$ each client only has access to the data generated during $t$ and a small subset of the past data. Thus in a CFL setting, we learn a series of models $w_1,\dots,w_T$, with the goal that $w_T$ minimizes (\ref{eq:loss-CFL});
each $w_t$, for $t \in [T]$, is trained on $X_{1,t},\dots,X_{M,t}$ using Federated Learning with initialization from $w_{t-1}$, except $w_1$, which is initialized randomly or pre-trained, e.g., on publicly available data.

\section{Episodic Replay Strategies}
\label{sec:strategies}

For each $t$, $w_t$ is trained on $X_{1,t},\dots,X_{M,t}$, 
so we may expect that $w_t$ minimizes $\sum_m \frac{n_{m,t}}{n_t}\ell(X_{m,t};w_t)$;
however, it is not necessarily true that $w_t$ minimizes $\sum_m \frac{n_{m,t'}}{n_{t'}}\ell(X_{m,{t'}};w_t)$ for $t'<t$
because training on later data can result in \textit{forgetting}.
\textit{Episodic replay} is a simple and effective remedy whereby, at each time period $t$,
each client $m$ has a \textit{replay buffer} $R_{m,t}$ containing at most $N_m$ data from $X_{m,1},\dots,X_{m,t-1}$,
where $N_m$ is the replay buffer size for client $m$.
Then $w_t$ is trained on $X_{1,t}\cup R_{1,t},\dots,X_{M,t}\cup R_{M,t}$ using federated learning.
The purpose of the replay buffer is to alleviate forgetting 
and ultimately result in a good performing model across time periods,
and it has been shown in numerous 
CL \cite{rebuffi17, aljundi19a,  aljundi19b, borsos20, verwimp21}
and CFL \cite{guo21,Dupuy2023}
works that episodic replay is effective in accomplishing that.
The defining feature of an episodic replay strategy is how $R_{m,t+1}$ is selected from $X_{m,t}\cup R_{m,t}$.

We next describe several such sample selection strategies,
which we call \textit{uncoordinated} if the selection is made independently at each client,
or \textit{coordinated} if the selection is made jointly across clients.

\subsection{Random sample selection}

The most basic approach to replay sample selection is random sampling,
which is always uncoordinated.
We consider three baseline methods based on random sampling:
naive uniform, approximation of uniform, and fixed proportion proposed in \citep{Dupuy2023}, that we also describe in Appendix~\ref{sec:appendix-random-sampling}.

\subsection{Uncoordinated gradient-based selection}

Replay sample selection from CL can be adapted for uncoordinated sample selection in CFL
by applying them independently at each client.
Thus, to simplify notation for uncoordinated strategies, 
we can omit the client index $m$.
We adopt the strategy of \cite{aljundi19a} to select data into the replay buffer with high diversity of loss gradients,
that is, the gradient of the loss function with respect to model parameters,
as used to train the model.
At period $t$, we compute the loss gradients after training model $w_t$ on $X_t\cup R_t$.
For a given client at the end of period,
let $g_{i}\in\mathbb{R}^d$ be the loss gradient for sample $i\in[n'_{t}]$ for model $w_t$, 
with $d$ the number of model parameters,
and let $n'_t=|X_t\cup R_t|$ be the size of the data and replay buffer at time $t$.
As per \cite{aljundi19a}, we select the replay buffer $R_{t+1}$ 
to minimize the cosine similarity of gradients for selected samples.
\begin{equation}
\label{opt:gss-original}
\begin{aligned}
    \min_R \quad &
    \sum_{i,j\in R} \frac{\langle g_i, g_j\rangle}{\lVert g_i\rVert\lVert g_j\rVert} \\
    \text{s.t.} \quad & R\subseteq X_{t}\cup R_{t} \\
    & |R|=N
\end{aligned}
\end{equation}
This is generally NP-Hard to solve exactly \cite{aljundi19a}.
As a result, \cite{aljundi19a} proposes two methods to find approximate solution,
one using a greedy heuristic and the other using online clustering,
both of which are designed for efficiency in an online learning setting.
We propose a different approximation:
introduce variables $x_{i}$, $i\in [n'_t]$ 
and equivalently write Problem~(\ref{opt:gss-original}) as
\begin{equation}
\label{opt:gss-x}
\begin{aligned}
    \min_x \quad & x^T G^T G x\\
    \text{s.t.} \quad & x_i\in\{0,1\} \text{ for all } i \\
    & \sum_i x_{i}=N
\end{aligned}
\end{equation}
where $G\in\mathbb{R}^{d\times n'_t}$ is the matrix of gradient directions defined by $G_{:,i}=g_i/\lVert g_i\rVert$,
and let $R_{t+1}=\{i\mid x^*_{i}=1\}$ for solution $x^*$ to Problem~(\ref{opt:gss-x}).
We relax the domain of $x_i$ from $\{0,1\}$ to $[0,1]$.
The resulting problem is convex quadratic minimization and efficient to solve;
we finally let $R_{t+1}$ be the set of data with the top-$N$ values in the solution $x^*$.

Because the diagonal of $G^T G$ is 1, and because with high-dimensional gradients the off-diagonal elements of $G$ tend to be near 0,
$x^*$ tends to have values mostly close to the average $N/n'_t$,
so the solution resulting from the top-$N$ operation may be poor.
To alleviate this, we set the diagonal of $G^T G$ to zero,
which is equivalent to removing the $i=j$ terms in the sum of Problem~(\ref{opt:gss-original}),
which always sum to $N$, so this does not change the minimizer.
In the relaxation, however, it tends to result in $x^*$ values that are mostly 0 and 1,
reducing error from the top-$N$ selection,
but causing the relaxation to possibly be non-convex.
We find that both versions of our relaxation result in better solutions in practice
than the heuristics from \cite{aljundi19a} 
(see Figure~\ref{fig:obj5_50} in Section~\ref{sec:experiments}),
with the non-convex outperforming the convex relaxation.
Therefore we use the non-convex relaxation of Problem~(\ref{opt:gss-x})
for uncoordinated gradient-based replay sample selection.
This relaxation-based formulation also makes possible the coordinated selection strategy proposed in the next section.

Due to the high-dimension of the gradients, 
it is best in practice to compute $G^T G$ first and solve the relaxation of Problem~(\ref{opt:gss-x}) as written; 
however, the relaxed problem can also be expressed more intuitively as
\begin{equation}
\label{opt:gss-norm}
\begin{aligned}
    \min_x \quad & \lVert G x\rVert^2 \\
    \text{s.t.} \quad & x_i\in[0,1] \text{ for all } i \\
    & \sum_i x_{i}=N
\end{aligned}
\end{equation}
and interpreted as choosing the data with the minimal-magnitude sum of gradient directions for selected data.
This will help motivate the coordinated formulation proposed in the next section.

\subsection{Coordinated sample selection}
A coordinated sample selection strategy 
aims for the union of all clients' replay buffers, 
rather than each clients' individual buffer, to be optimal.
For example, in uncoordinated selection, many clients may choose similar samples for replay,
which results in suboptimal representation for training,
but coordinated selection aims for diversity across clients.
This means clients cannot independently make selections,
and because client data (hence gradients) may not be communicated to the server,
replay sample selection methods for CL cannot necessarily be adapted directly into coordinated CFL methods.

To make the gradient diversity objective of ($\ref{opt:gss-norm}$) coordinated,
we sum over data in the union of all clients' selected replay samples
instead of an individual client's.
\begin{equation}
\label{opt:coord-1}
\begin{aligned}
    \min_{x_1,\dots,x_M} \quad & \left\lVert \sum_{m} G_m x_m \right\rVert^2 \\
    \text{s.t.} \quad & x_{m,i}\in[0,1] \text{ for all } m,i \\
    & \sum_i x_{m,i}=N_m \text{ for all } m
\end{aligned}
\end{equation}
The obvious approach is to have each client $m$ send $G_m$ to the server and solve Problem~(\ref{opt:coord-1}) there;
however, not only can Problem~(\ref{opt:coord-1}) be resource-intensive to solve centrally with many clients,
but this also introduces a very large communication cost, 
as each column of $G_m$ is the size of the model itself.
More importantly, communicating gradients puts client data at 
risk since individual gradients are vulnerable to privacy attack \cite{zhu19}.
Therefore, the goal is to solve Problem~(\ref{opt:coord-1}) 
without substantial increase in communication or computation cost,
and without communicating data, gradients,
or anything else that reduces privacy.

We propose an alternating minimization process whereby an objective is minimized alternatively at the server
and in parallel at the clients.
Define auxiliary variables $h_1, \dots, h_M$ 
such that $h_m := G_m x_m - \frac{1}{M}\sum_{n \in [M]} G_n x_n$. Then we have
\begin{align*}
\left\lVert \sum\limits_{n\in [M]} G_n x_n \right\rVert^2 = M^2 \left\lVert G_m x_m - h_m \right\rVert^2
\end{align*}
for each $m \in [M]$. Adding over all $m \in [M]$,
Problem~(\ref{opt:coord-1}) can be equivalently written as 
\begin{equation}
\label{opt:coord-2}
\begin{aligned}
\min_{\substack{x_1,\dots, x_M \\ h_1,\dots, h_M}} \quad & M \sum\limits_{m \in [M]} \left\lVert G_m x_m - h_m \right\rVert^2 \\
\text{s.t.} \quad 
& x_{m,i}\in[0,1] \text{ for all } m,i \\
& \sum_i x_{m,i}=N_m \text{ for all } m \\
& h_m = G_m x_m - \frac{1}{M}\sum_n G_n x_n.
\end{aligned}
\end{equation}
Next, relax Problem~(\ref{opt:coord-2}) to
\begin{equation}
\label{opt:coord-3}
\begin{aligned}
\min_{\substack{x_1,\dots, x_M \\ h_1,\dots, h_M}} \quad & M \sum\limits_{m} \left\lVert G_m x_m - h_m \right\rVert^2 \\
\text{s.t.} \quad 
& x_{m,i}\in[0,1] \text{ for all } m,i \\
& \sum_i x_{m,i}=N_m \text{ for all } m \\
& \sum\limits_{m} h_m = 0.
\end{aligned}
\end{equation}
Problem~(\ref{opt:coord-3}) is a relaxation of Problem~(\ref{opt:coord-2}) 
because the feasible set of the latter is a subset of the former. 
Theorem~\ref{thm:altmin}, proven in Appendix~\ref{sec:appendix-proof}, 
shows that this relaxation is tight.

\begin{theorem}
\label{thm:altmin}
If $x^*_1, \dots, x^*_M, h^*_1, \dots, h^*_M$ is an optimal solution of (\ref{opt:coord-3}), then it is also an optimal solution of (\ref{opt:coord-2}). 
\end{theorem}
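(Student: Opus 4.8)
The plan is to use the fact that Problem~(\ref{opt:coord-3}) differs from Problem~(\ref{opt:coord-2}) only by weakening the coupling constraint $h_m = G_m x_m - \frac1M\sum_n G_n x_n$ to the single linear constraint $\sum_m h_m = 0$, while keeping the box and simplex constraints on $x$ and the objective $M\sum_m\lVert G_m x_m - h_m\rVert^2$ unchanged. As already noted in the text, every $(x,h)$ feasible for (\ref{opt:coord-2}) satisfies $\sum_m h_m = \sum_m G_m x_m - \sum_n G_n x_n = 0$, so it is feasible for (\ref{opt:coord-3}); hence the optimal value of (\ref{opt:coord-3}) is at most that of (\ref{opt:coord-2}). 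The entire proof then reduces to showing that an optimizer of (\ref{opt:coord-3}) is in fact feasible for (\ref{opt:coord-2}), after which a one-line sandwiching argument finishes.

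First I would take an optimal solution $(x_1^*,\dots,x_M^*,h_1^*,\dots,h_M^*)$ of (\ref{opt:coord-3}), freeze the $x$-block at $x^*$, and examine the residual minimization over $h$ alone: minimize $\sum_m \lVert G_m x_m^* - h_m\rVert^2$ subject to $\sum_m h_m = 0$. This is exactly the Euclidean projection of the tuple $(G_1 x_1^*,\dots,G_M x_M^*)$ onto the linear subspace $\{h:\sum_m h_m = 0\}$. Since the objective is strictly convex in $h$ and the constraint set is a nonempty affine subspace, this subproblem has a unique minimizer, characterized by its first-order conditions: introducing a single multiplier $\lambda$ for the constraint gives $h_m = G_m x_m^* - \lambda$ for all $m$, and substituting into $\sum_m h_m = 0$ yields $\lambda = \frac1M\sum_n G_n x_n^*$, i.e. the unique minimizer is $h_m = G_m x_m^* - \frac1M\sum_n G_n x_n^*$.

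Next I would argue that $h^*$ must equal this unique minimizer. The only constraint of (\ref{opt:coord-3}) involving $h$ is $\sum_m h_m = 0$, which the projected tuple satisfies, and the $x$-constraints are untouched; so if $h^*$ were different from the projection, replacing $h^*$ by it while keeping $x^*$ fixed would stay feasible for (\ref{opt:coord-3}) yet strictly decrease the objective, contradicting optimality of $(x^*,h^*)$. Hence $h_m^* = G_m x_m^* - \frac1M\sum_n G_n x_n^*$ for every $m$, which is precisely the extra constraint of (\ref{opt:coord-2}); combined with the box and simplex constraints on $x^*$ that both problems share, this shows $(x^*,h^*)$ is feasible for (\ref{opt:coord-2}). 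Since it is feasible for (\ref{opt:coord-2}) and attains objective value equal to the optimal value of (\ref{opt:coord-3}), which we showed is at most the optimal value of (\ref{opt:coord-2}), it must attain the optimum of (\ref{opt:coord-2}), proving the claim.

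The only step that is not pure bookkeeping is the characterization of the $h$-subproblem's solution: one must note that strict convexity of $\sum_m\lVert G_m x_m^* - h_m\rVert^2$ in $h$ together with the affineness and nonemptiness of the constraint forces \emph{uniqueness} of the minimizer, so that any optimal $h^*$ is pinned down to the centered tuple rather than merely being some minimizer achieving the same value. Everything else — the relaxation inequality on optimal values and the final sandwich — is routine.
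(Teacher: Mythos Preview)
Your proof is correct and follows the same overall skeleton as the paper's: first show that any optimizer of (\ref{opt:coord-3}) satisfies the stronger coupling constraint of (\ref{opt:coord-2}), then finish with the relaxation/sandwich argument. The difference is purely in how the first step is carried out. The paper writes out the full KKT system for (\ref{opt:coord-3}) --- including stationarity conditions in $x$ with multipliers $u_m,v_m,\alpha_m$ that are never actually used --- and then isolates the $h$-stationarity equation $-2M(G_m x_m^* - h_m^*) + w^* = 0$, sums over $m$, and solves. You instead freeze $x^*$ and recognize the $h$-subproblem as the Euclidean projection of $(G_1 x_1^*,\dots,G_M x_M^*)$ onto the zero-sum subspace, whose unique solution is the centered tuple; optimality of $(x^*,h^*)$ then forces $h^*$ to coincide with that projection. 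Your route is slightly more elementary (it needs only strict convexity and uniqueness of the projection, not the full KKT apparatus) and avoids the extraneous $x$-multipliers, but mathematically both arguments extract exactly the same first-order information in $h$.
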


As a consequence of Theorem~\ref{thm:altmin}, 
we can determine an optimal solution of the original coordinated problem (\ref{opt:coord-1}) 
by solving (\ref{opt:coord-3}). 
Moreover, if we fix $h$ and consider minimization only over $x$, 
then Problem~(\ref{opt:coord-3}) is separable over the $M$ clients.
This means we can use an alternating minimization 
(more generally called block coordinate descent \cite{wright15})
algorithm where 
each client $m$ optimizes w.r.t.~$x_m$ given $h_m$ in parallel and sends the resulting $G_m x^*_m$ to the server,
then the server optimizes w.r.t.~$h_1, \dots, h_M$ given $G_1 x_1,\cdots,G_M x_M$ and sends each resulting $h^*_m$ to client $m$.
We initialize with $h_m=0$ for all $m$
so that the selection at zero iterations is the same as uncoordinated.
Pseudocode is given in Algorithm~\ref{alg:coord}.
It is shown by \cite{luo93} that block coordinate descent of a quadratic function
over a convex polyhedron converges at least linearly to a stationary point,
and in our case, that function is convex,
so this alternating process improves at every iteration
and converges at least linearly to an optimum of the coordinated objective on the relaxed domain.

Despite this, neither the data itself nor individual gradients need to be communicated.
What is communicated is targets $h_m$ and weighted sum loss gradients $G_m x_m$.
Each is just one gradient-sized vector 
rather than one per local data point as in sending the gradients themselves.
Thus the communication cost per iteration is the same as FedAVG.
The number of iterations can be chosen up-front as a hyperparameter
to trade off optimality of the selection with number of rounds
and total volume of communication,
or there could be a stopping condition
such as a threshold on change in loss indicating convergence.
As for privacy, FedAVG itself already makes a weighted combination of gradients public
when run with one batch per client;
it is simply the difference between the model parameters sent to the client
and the parameters the client sends back to the server.
In this sense, this algorithm is no less private than general FedAVG.


\algdef{SE}[SUBALG]{Indent}{EndIndent}{}{\algorithmicend\ }%
\algtext*{Indent}
\algtext*{EndIndent}

\begin{algorithm}
\caption{Coordinated replay sample selection.}
\label{alg:coord}
\begin{algorithmic}
\State at each client $m$: 
\Indent
    \State $G_m\gets$ gradients at $X_t\cup R_t$
    \State $h_m\gets 0$
\EndIndent
\Repeat 
    \State at each client $m$: 
    \Indent
        \State $\begin{aligned}
\textstyle x_m\gets \min_x \quad & \left\lVert G_m x - h_m \right\rVert^2 \\
 \textstyle\text{s.t.} \quad 
& x_{i}\in[0,1] \text{ for all } i \\
& \textstyle\sum_i x_i=N_m
\end{aligned}$ 
        \State send $G_m^Tx_m$ to the server
    \EndIndent
    \State at the server:
    \Indent
        \State $h_m\gets G_mx_m-\frac{1}{M}\sum_{n=1}^M G_nx_n$
        \State send each $h_m$ to client $m$
    \EndIndent
\Until convergence or max iterations
\State at each client $m$: 
\Indent 
    \State select $R_{t+1}$ from $X_t\cup R_t$
    by top-$N_m$ of $x_m$
\EndIndent
\end{algorithmic}
\end{algorithm}

To efficiently solve the minimization at clients
when gradients are large,
write
$\lVert G_m x - h_m \rVert^2
= x^T G_m^T G_m x + h_m^T h_m - 2h_m^T G_m x$
and pre-compute $G_m^T G_m$ and $h_m^T G_m$.
Also, $G_m$ is the same at each iteration of the alternating minimization,
so $G_m^T G_m$ may be computed just once.

\subsubsection{Intuitive interpretation}
\label{sec:appendix-intuition}

This alternating minimization process has an intuitive interpretation.
The goal is to choose replay data such that 
their sum of loss gradient directions across clients is close to zero.
The server sends a ``target sum gradient'' $h_m$ to each client $m$,
which is initially zero.
Each client independently chooses data so that its sum gradient $G_m x_m$
is as close as possible to its target $h_m$,
then sends the result $G_m x_m$ back to the server.
The server adjusts the targets $h_m$ 
to be as close as possible to the sum gradients actually returned by the clients,
while maintaining that $\sum_m h_m=0$.
In this sense, the back-and-forth process searches for the sum gradient assignments $h_m$
that sum to zero, and therefore targets the coordinated gradient diversity objective,
while being the most individually achievable by clients given their respective data.


\section{Experiments}
\label{sec:experiments}
We run experiments to demonstrate 
the quality of our relaxation-based sample selection
and the performance of models trained using CFL with the proposed sample selection strategies.
Additional experimental details and results are in Appendix~\ref{sec:appendix-experiments}.

\subsection{Near-optimality of relaxation-based selection}

\begin{figure}
\centering
\includegraphics[width=\linewidth]{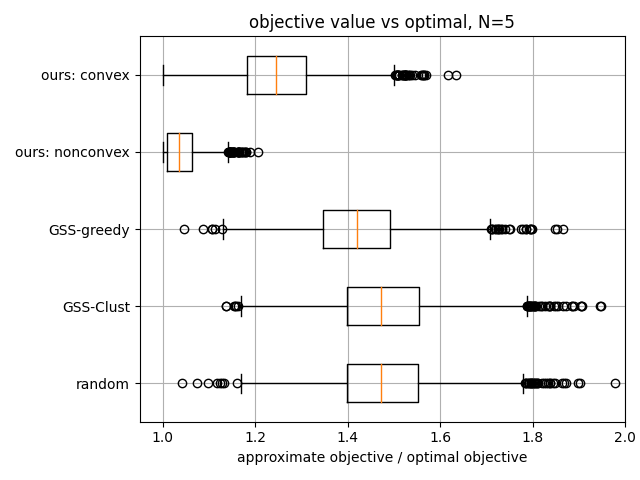}
\caption{Distribution of objective values vs.~optimal for approximate sample selection strategies.}
\label{fig:obj5_50}
\end{figure}

We empirically compare our relaxation-based sample selection
to the heuristic selection strategies proposed by \cite{aljundi19a},
as well as a random selection baseline.
We use randomly drawn vectors $g_i\in\mathbb{R}^{300}$
and select $N=5$ out of $n=50$ data. We repeat the selection process 5000 times.
For each approach, we assess the quality of the selection
by comparing the resulting objective value
as in Problem~(\ref{opt:gss-original})
to the optimal value obtained by brute-force search (which is possible because $N$ and $n$ are small).

The distribution of objective ratios for each method
is shown in Figure~\ref{fig:obj5_50}.
Our relaxations achieve the best objective values,
with the non-convex relaxation being superior;
we expect this is because, with the convex relaxation, 
many $x^*$ values are close to the mean,
resulting in error during the top-$N$ operation that is not present
with the non-convex relaxation,
where $x^*$ values are close to 0 and 1.
In terms of objective value, 
the heuristic selection strategies from \cite{aljundi19a}
are only slightly better than random.

\subsection{Comparison of sample selection methods}

We compare CFL models learned using various replay buffer sizes and 
sample selection strategies,
including the proposed coordinated and uncoordinated strategies
as well as baseline strategies using random sampling.
We train a model with the TinyBERT architecture to a masked language modeling (MLM) task,
where the performance metric is perplexity (lower is better).
We choose TinyBERT \cite{jiao-etal-2020-tinybert} 
because distilled models with smaller footprints are more suitable for FL applications. 
We use 5 data sets, each of which comprises of automated transcriptions of utterances from a random sample of 1000 voice assistant users
split into 10 time periods of 5 weeks each:
the first 4 weeks are used for training and the remaining 1 week for testing.
Additional experiment details in Appendix~\ref{sec:appendix-main-experiments}.

\begin{figure}
\centering
\begin{subfigure}[t]{\linewidth}
\includegraphics[width=\linewidth]{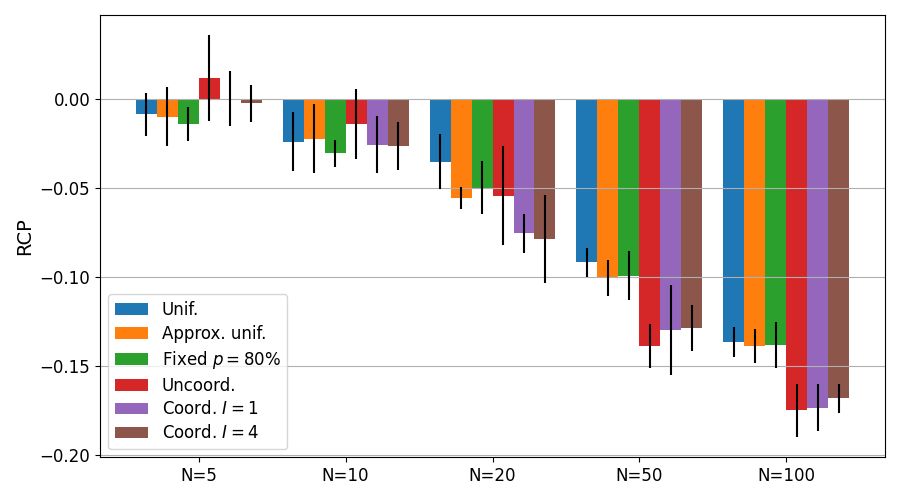}
\caption{All-period test set perplexity.}
\end{subfigure}
\begin{subfigure}[t]{\linewidth}
\includegraphics[width=\linewidth]{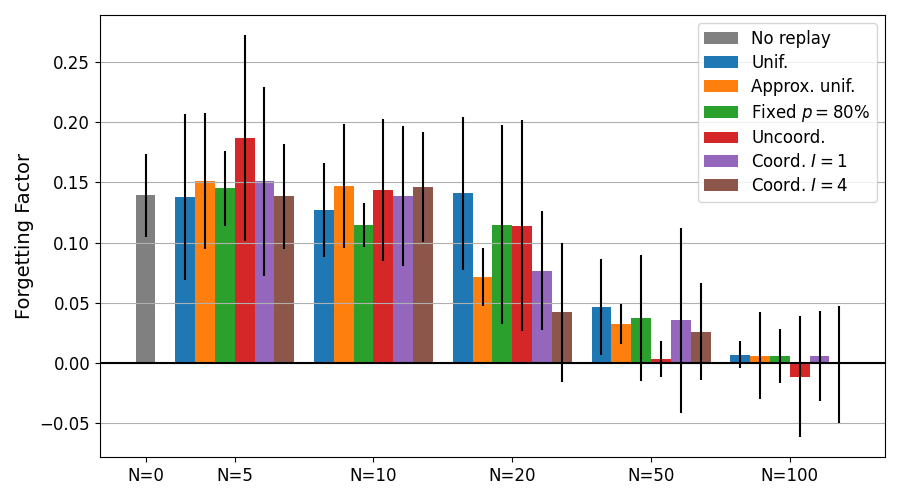}
\caption{All-period test set forgetting.}
\end{subfigure}
\caption{Relative change 
in perplexity (RCP) of models learned with various replay sample selection strategies.
Error bars show standard deviation over 5 different samples of clients.}
\label{fig:bar}
\end{figure}

All results are given in terms of relative change in perplexity (RCP),
that is, the relative change in perplexity for the experimental model
with respect to the model trained without episodic replay ($N=0$).
We also report the forgetting factor, 
defined as the difference in performance between the latest model 
and the best performance of the previous models on the same test set \cite{Dupuy2023}. 
A zero or negative value means that the latest model 
does not present forgetting on this test set;
a positive value means that a past model performs better than the latest model on this test set, 
which indicates forgetting.

Figure~\ref{fig:bar} shows the overall performance and forgetting factor
for each replay buffer size and sample selection strategy.
As expected, we see that the error and forgetting both decrease 
as the replay buffer size $N$ increases;
at $N=100$, there is close to no forgetting on average.
We also see that gradient-based sample selection 
increasingly outperforms random sample selection
as $N$ increases.
Coordinated sample selection appears to outperform uncoordinated sample selection with a low replay budget, $N\leq 20$.
There does not seem to be a notable difference between 1 and 4 iterations of coordinated optimization, suggesting that most of the benefit from coordinated selection is achieved after just one iteration.

\begin{figure}
\centering
\includegraphics[width=\linewidth]{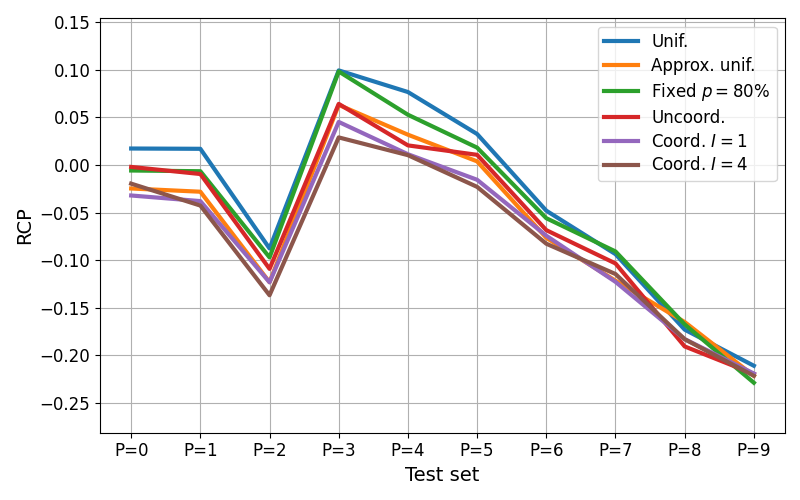}
\caption{Performance on each period for $N=20$.}
\label{fig:line}
\end{figure}

Figure~\ref{fig:line} shows the $N=20$ RCP results
for each period of the test set,
relative to the all-period test perplexity for the no-replay model.
As expected, with some exception, 
performance is generally better on more recent periods.
Also, the performance gap between methods is larger on earlier time periods,
with the coordinated methods consistently performing best on each time period
except the most recent ones.
Results for other $N$ are shown 
in Appendix~\ref{sec:appendix-main-experiments}.

\section{Discussion}
We proposed a new relaxation for gradient-based selection of replay samples in continual learning.
Based on this, we proposed the first algorithm 
for coordinated replay sample selection
in continual federated learning,
which converges to the optimal selection under our relaxation
while maintaining privacy and low communication cost.
Our experiments show that, compared to random sampling,
the gradient-based selection of replay samples
improves performance of the final model 
for various replay buffer sizes,
and coordinated selection improves for small buffer sizes.

\section{Limitations}
The reproducibility of this work is limited because the data used for some experiments is not public.
Moreover, training language models in a large CFL setting is extremely demanding of both time and computational resources.

\section*{Acknowledgements}
We thank Saleh Soltan for creating the BERT embeddings and encoder that were used in this work. 

\bibliographystyle{acl_natbib}
\bibliography{test}

\appendix

\section*{Appendix}

\section{Random sampling strategies}
\label{sec:appendix-random-sampling}
Here we describe the replay sample selection strategies
based on random sampling,
which were omitted from the main text to comply with page limits.

\textbf{Naive uniform:} each client samples $N$ data uniformly at random from $X_{t}\cup R_{t}$. 
This method is ``naive'' because the likelihood of selecting examples from the earliest periods decreases with time, 
which suggests higher vulnerability to catastrophic forgetting.

\textbf{Approximation of uniform:} each client samples $N n_{t}/n_{\leq t}$ data uniformly from $X_{t}$
and $N n_{<t}/n_{\leq t}$ data uniformly from $R_{t}$.
In this way, $R_{t+1}$ approximates a uniform sample from $X_{\leq t}$, the set of all data seen so far.
While this allows early time periods to continue to be represented, the representation of each individual period
reduces over time; after many time steps, the number of samples from even the most recent time period approaches 0.

\textbf{Fixed proportion $p\in(0, 1)$:} each client samples $p N$ data uniformly from $X_{t}$
and $(1-p) N$ data uniformly from $R_{t}$. 
Like naive uniform, the buffer contains fewer data from earlier periods, 
but the decrease is controlled by the chosen $p$ instead of customer activity. 

\section{Proof of Theorem~\ref{thm:altmin}}
\label{sec:appendix-proof}

\setcounter{theorem}{0}

\begin{theorem}
If $x^*_1, \dots, x^*_M, h^*_1, \dots, h^*_M$ is an optimal solution of (\ref{opt:coord-3}), then it is also an optimal solution of (\ref{opt:coord-2}). 
\end{theorem}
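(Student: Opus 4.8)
The plan is to exploit the fact that Problem~(\ref{opt:coord-3}) relaxes only the coupling constraint on $h$, replacing $h_m = G_m x_m - \frac{1}{M}\sum_n G_n x_n$ by the weaker $\sum_m h_m = 0$, while leaving the $x$-constraints and the objective unchanged. Write $f(x,h) := M\sum_m \lVert G_m x_m - h_m\rVert^2$ for the common objective, and let $\mathcal{F}_2$, $\mathcal{F}_3$ denote the feasible sets of (\ref{opt:coord-2}) and (\ref{opt:coord-3}). As already noted in the text, $\mathcal{F}_2\subseteq\mathcal{F}_3$, so $\min_{\mathcal{F}_3} f \le \min_{\mathcal{F}_2} f$. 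It therefore suffices to show that any optimal solution $(x^*,h^*)$ of (\ref{opt:coord-3}) in fact lies in $\mathcal{F}_2$: then $f(x^*,h^*)\ge \min_{\mathcal{F}_2} f \ge \min_{\mathcal{F}_3} f = f(x^*,h^*)$, forcing equality, so $(x^*,h^*)$ is optimal for (\ref{opt:coord-2}).

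The key step is a small constrained least-squares lemma: for any fixed $x$ satisfying the box and sum constraints, the problem $\min_h \sum_m \lVert G_m x_m - h_m\rVert^2$ subject to $\sum_m h_m = 0$ has the \emph{unique} minimizer $h_m = G_m x_m - \frac{1}{M}\sum_n G_n x_n$. I would prove this by introducing a Lagrange multiplier $\lambda$ for the single constraint $\sum_m h_m = 0$, setting the gradient in each block $h_m$ to zero to obtain $h_m = G_m x_m - \tfrac{1}{2}\lambda$, and then solving $\sum_m h_m = 0$ for $\lambda$; strict convexity of $h\mapsto\sum_m\lVert G_m x_m - h_m\rVert^2$ (a sum of strictly convex functions in disjoint blocks of variables) on the affine feasible set guarantees the minimizer is unique.

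Applying the lemma with $x = x^*$: since $(x^*,h^*)$ is optimal for (\ref{opt:coord-3}) and $h^*$ satisfies $\sum_m h^*_m = 0$, the vector $h^*$ must minimize $f(x^*,\cdot)$ over $\{h : \sum_m h_m = 0\}$; otherwise replacing $h^*$ by the lemma's minimizer while keeping $x^*$ fixed would strictly lower $f$ yet remain feasible in (\ref{opt:coord-3}), contradicting optimality. By uniqueness, $h^*_m = G_m x^*_m - \frac{1}{M}\sum_n G_n x^*_n$ for every $m$, hence $(x^*,h^*)\in\mathcal{F}_2$, and combining with the first paragraph finishes the proof.

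The main (really, the only) obstacle is the least-squares lemma, and within it the uniqueness claim: it is not enough that $h^*$ attains the minimal objective value among feasible $h$; we need $h^*$ to equal the specific point that renders $(x^*,h^*)$ feasible for (\ref{opt:coord-2}), which is exactly what uniqueness of the constrained projection supplies. Everything else is bookkeeping with the inclusion $\mathcal{F}_2\subseteq\mathcal{F}_3$ and the identity $\lVert\sum_n G_n x_n\rVert^2 = M\sum_m\lVert G_m x_m - h_m\rVert^2$ already established before the theorem statement.
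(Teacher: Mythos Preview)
Your proposal is correct and follows essentially the same route as the paper's proof: both arguments first show that any optimizer $(x^*,h^*)$ of (\ref{opt:coord-3}) automatically satisfies $h^*_m = G_m x^*_m - \frac{1}{M}\sum_n G_n x^*_n$ via the Lagrangian stationarity condition in $h$ (the paper writes the full KKT system, you isolate the $h$-subproblem as a least-squares lemma, but the computation is identical), and then both conclude optimality for (\ref{opt:coord-2}) from the inclusion $\mathcal{F}_2\subseteq\mathcal{F}_3$. Your presentation is slightly leaner in that you avoid listing the complementary-slackness and $x$-stationarity conditions that the paper records but never uses.
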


\begin{proof}
We first show that $x^*_1, \dots, x^*_M, h^*_1, \dots, h^*_M$ is a feasible solution of (\ref{opt:coord-2}). 
Since (\ref{opt:coord-3}) is convex, using the KKT optimality conditions, 
$x^*_1, \dots, x^*_M, h^*_1, \dots, h^*_M$ is optimal for (\ref{opt:coord-3}) 
if and only if it is feasible for (\ref{opt:coord-3}) 
and there exist non-negative vectors $u^*_1, \dots, u^*_M, v^*_1, \dots, v^*_M$, vector $w^*$, 
and scalars $\alpha^*_1, \dots, \alpha^*_M$ satisfying
\begin{itemize}
    \item $x^*_{m, i} u^*_{m, i}=0$ and $(1-x^*_{m, i}) u^*_{m, i}=0$ for all $m \in [M], i \in [N_m]$,
    \item $2 M G_m^T\left(G_m x^*_m - h^*_m \right) - u^*_m + v^*_m + \alpha^*_m e = 0 $ for all $m \in [M]$,
    \item $-2M \left(G_m x^*_m - h^*_m \right) + w^* = 0 $ for all $m \in [M]$.
\end{itemize}
Adding the last equation over all $m \in [M]$, we get $w^* = 2 \sum_m\left(G_m x^*_m - h^*_m \right)$ and therefore, 
for all $m \in [M]$,
\begin{align*}
    h^*_m &= G_m x^*_m - \dfrac{w^*}{2M} \\
    &= G_m x^*_m - \frac{1}{M} \sum_m\left(G_m x^*_m - h^*_m \right) \\
    &= G_m x^*_m - \frac{1}{M} \sum_m G_m x^*_m. \\ 
    &\left(\because \sum\limits_{m \in [M]}h^*_m=0 \right)
\end{align*}
This shows that $x^*_1, \dots, x^*_M, h^*_1, \dots, h^*_M$ is a feasible solution of (\ref{opt:coord-2}). 

Next we show the optimality of $x^*_1, \dots, x^*_M, h^*_1, \dots, h^*_M$ for (\ref{opt:coord-2}). 
Suppose for contradiction that $x'_1, \dots, x'_M, h'_1, \dots, h'_M$ is a feasible solution of (\ref{opt:coord-2}) 
such that
\begin{equation*}
    \sum_m \left\lVert G_m x'_m - h'_m \right\rVert^2 < \sum_m \left\lVert G_m x^*_m - h^*_m \right\rVert^2.
\end{equation*}
This contradicts the optimality of $x^*_1, \dots, x^*_M, h^*_1, \dots, h^*_M$ for (\ref{opt:coord-3}) 
since $x'_1, \dots, x'_M, h'_1, \dots, h'_M$ is a feasible solution of (\ref{opt:coord-2}).
\end{proof}


\section{Experiment Details and Additional Results}
\label{sec:appendix-experiments}
This section contains additional details and results for the experiments.

\subsection{Near-optimality of relaxation-based selection}
\label{sec:appendix-relaxation}
For these experiments, 
the vectors $g_i\in\mathbb{R}^{d}$ with $d=300$, $i\in[n]$ 
were generated for each of $M=1000$ clients
by sampling from a random Gaussian mixture as follows.
Let the number of centers be $n_c=p+1$ with $p\sim\text{Poisson}(4)$,
then sample centers $c_{k,j}\sim\mathcal{N}(0,1)$ for $k\in[n_c]$, $j\in[300]$
and normalize such that each $\{c_{k,j}\mid k\in[n_c]\}$ has mean 0 and standard deviation 1.
Let $w\sim\text{Dir}(\mathbf{1}_{n_c})$, 
where $\mathbf{1}_{n_c}$ is the vector of length $n_c$ whose elements are 1,
then for each $i\in[n]$, 
sample $k\sim\text{Categorical}(w)$ 
and $g_{i,j}\sim\mathcal{N}(c_{k,j}, 1)$.
Finally, normalize the $g_i$ 
so that $\{g_{i,j}\mid i\in[n]\}$ has mean 0 and standard deviation 1.

\begin{figure}
\centering
\begin{subfigure}[t]{\linewidth}
\includegraphics[width=\linewidth]{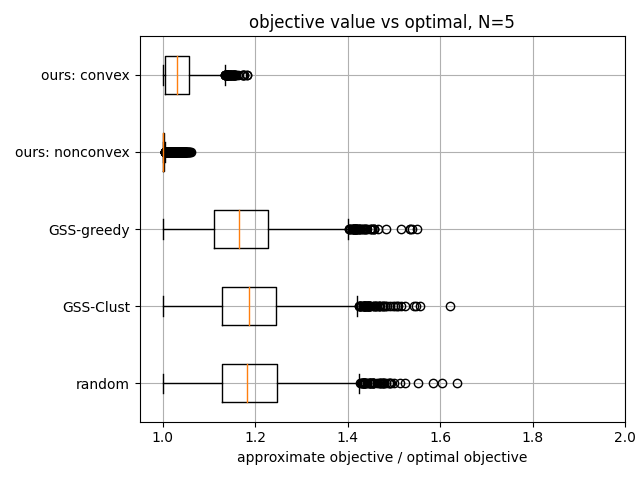}
\caption{$n=10$}
\end{subfigure}
\begin{subfigure}[t]{\linewidth}
\includegraphics[width=\linewidth]{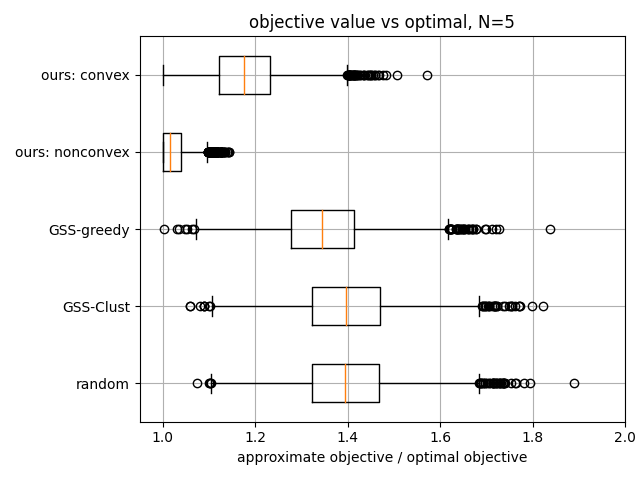}
\caption{$n=30$}
\end{subfigure}
\begin{subfigure}[t]{\linewidth}
\includegraphics[width=\linewidth]{results/obj5_50.png}
\caption{$n=50$}
\end{subfigure}
\caption{Distribution of objective values vs.~optimal for approximate sample selection strategies.}
\label{fig:obj5}
\end{figure}

The results for $n=50$ were shown in the main text;
we show results for additional $n$ in Figure~\ref{fig:obj5}.
We see that the relative gap 
between the optimal and approximate selection increases with $n$
for all methods;
however, the relative difference between the approximate methods
is similar regardless of $n$.

\subsection{Comparison of sample selection methods}
\label{sec:appendix-main-experiments}

We use a TinyBERT model~\citep{jiao-etal-2020-tinybert} for our experiments, with L=4, H=312, A=12 and feed-forward/filter size=1200 where we denote the number of layers (i.e., Transformer blocks) as L, the hidden size as H, and the number of self-attention heads as A.

We ran 1555 parallelized experiments using p3.16x instances. Our training time per period per instance was approximately 21 minutes. Note that there was wide variance in training time values given that experiments for earlier periods take less time than experiments for later periods because of the replay buffer increasing training data size.

\begin{figure*}
\centering
\includegraphics[width=0.8\linewidth]{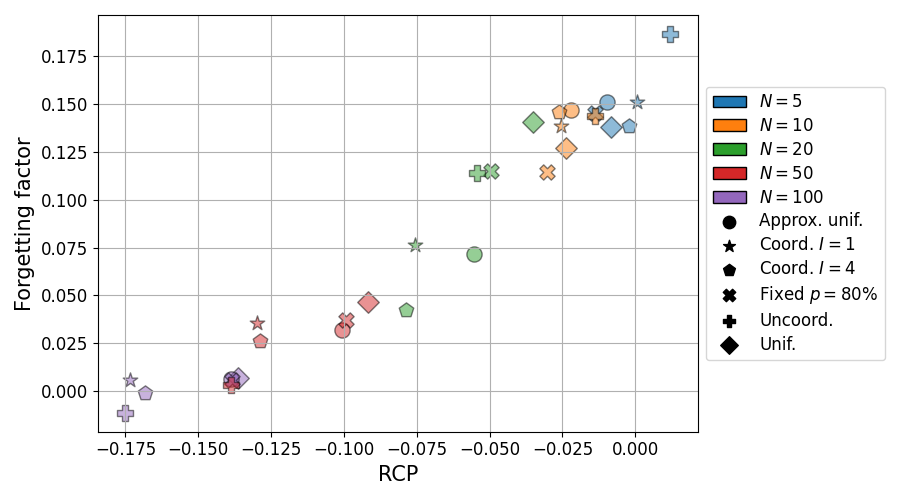}
\caption{Performance and forgetting results displayed as a scatterplot.}
\label{fig:scatterplot}
\end{figure*}

Figure~\ref{fig:scatterplot} shows the overall results in the form of a scatterplot.
This contains the same information as Figure~\ref{fig:bar}, but visualized differently.
There is a clear strong correlation between forgetting factor and performance;
this support the idea that the models with better replay improve performance by reducing forgetting.

Figure~\ref{fig:lines} shows the performance and forgetting broken down by the period of the test set,
as in Figure~\ref{fig:line},
but for both evaluation metrics and for all $N$.

\begin{figure*}
\centering
\begin{subfigure}[t]{0.4\linewidth}
\includegraphics[width=\linewidth]{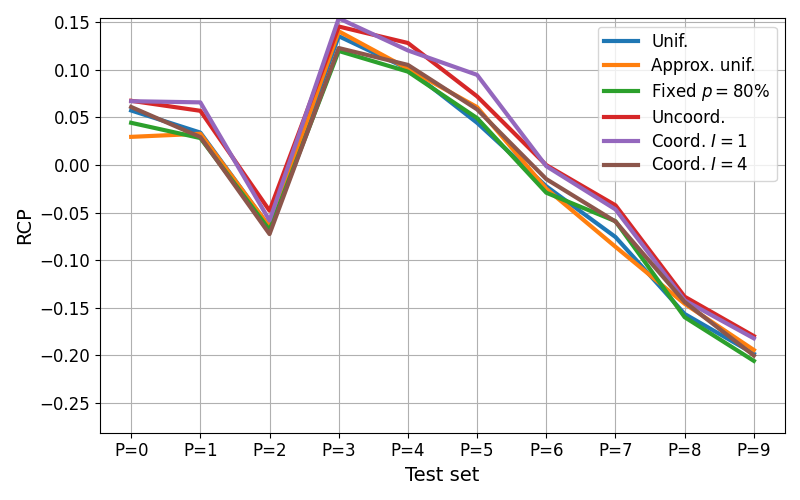}
\caption{$N=5$ performance.}
\end{subfigure}
\begin{subfigure}[t]{0.4\linewidth}
\includegraphics[width=\linewidth]{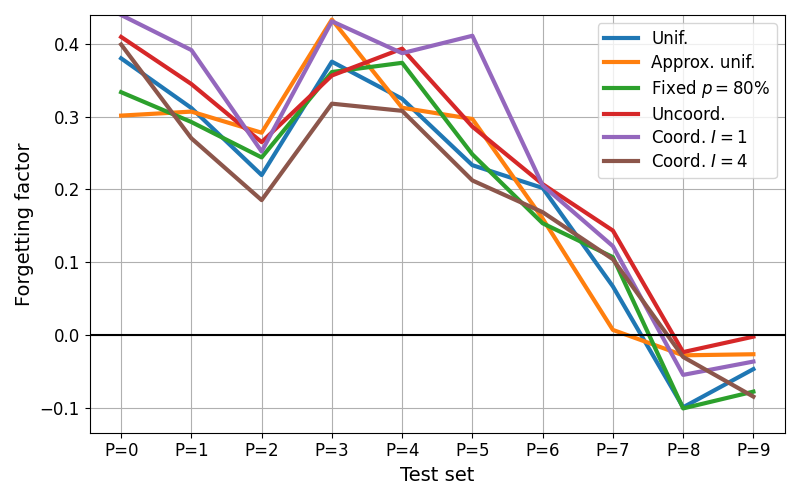}
\caption{$N=5$ forgetting.}
\end{subfigure}
\begin{subfigure}[t]{0.4\linewidth}
\includegraphics[width=\linewidth]{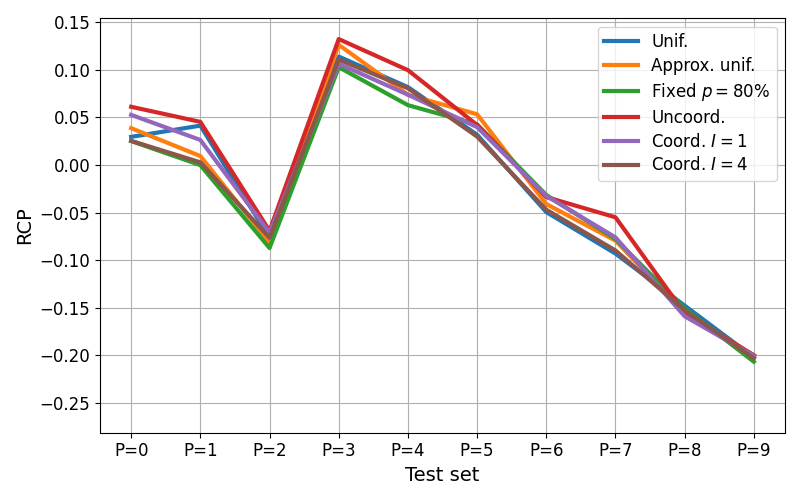}
\caption{$N=10$ performance.}
\end{subfigure}
\begin{subfigure}[t]{0.4\linewidth}
\includegraphics[width=\linewidth]{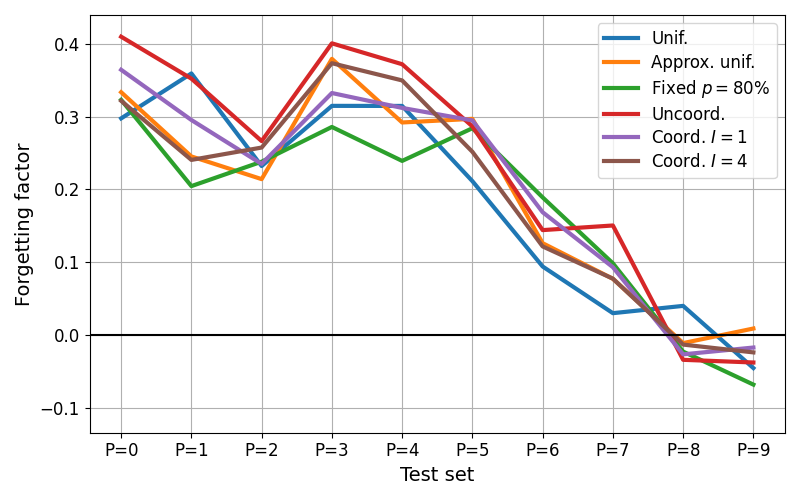}
\caption{$N=10$ forgetting.}
\end{subfigure}
\begin{subfigure}[t]{0.4\linewidth}
\includegraphics[width=\linewidth]{results/1k-relative-performance_line_plot_N20.png}
\caption{$N=20$ performance.}
\end{subfigure}
\begin{subfigure}[t]{0.4\linewidth}
\includegraphics[width=\linewidth]{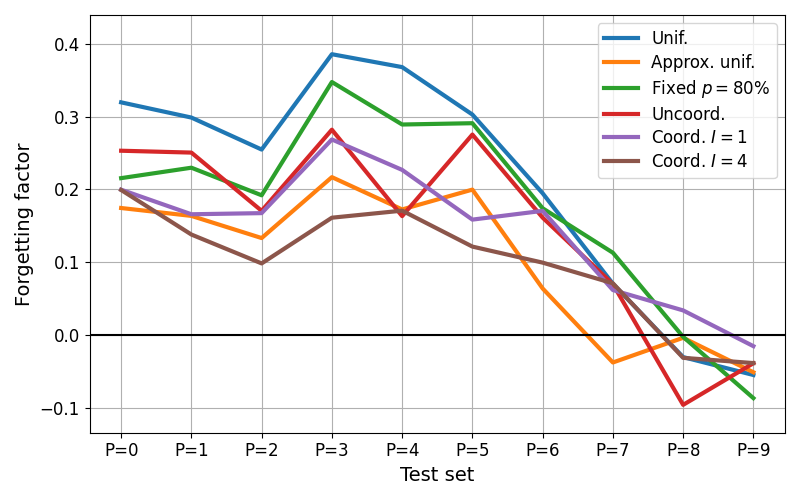}
\caption{$N=20$ forgetting.}
\end{subfigure}
\begin{subfigure}[t]{0.4\linewidth}
\includegraphics[width=\linewidth]{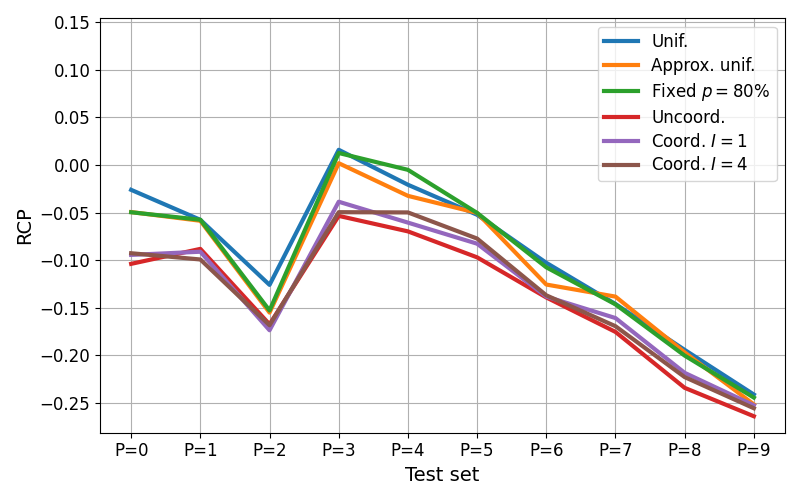}
\caption{$N=50$ performance.}
\end{subfigure}
\begin{subfigure}[t]{0.4\linewidth}
\includegraphics[width=\linewidth]{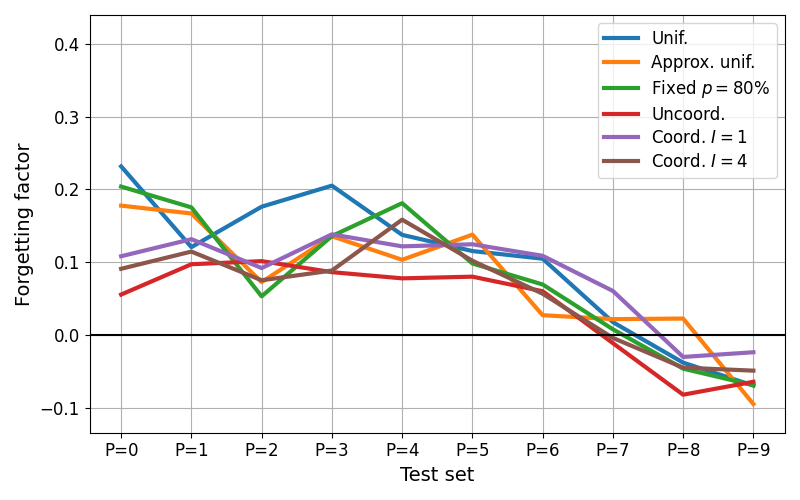}
\caption{$N=50$ forgetting.}
\end{subfigure}
\begin{subfigure}[t]{0.4\linewidth}
\includegraphics[width=\linewidth]{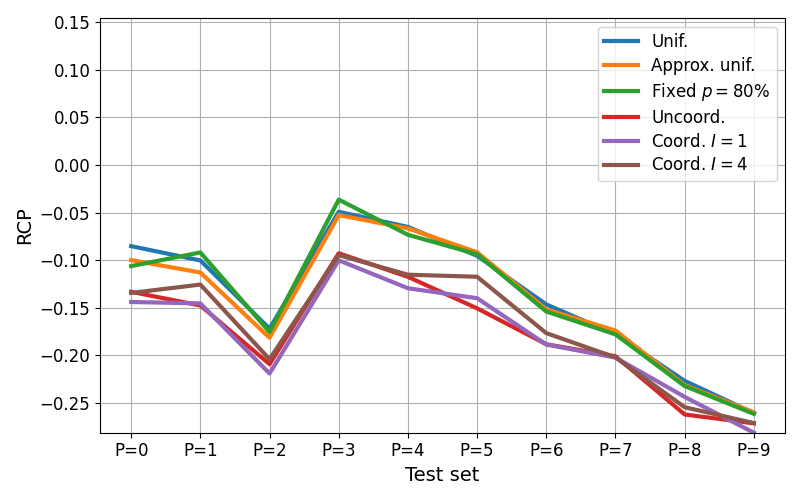}
\caption{$N=100$ performance.}
\end{subfigure}
\begin{subfigure}[t]{0.4\linewidth}
\includegraphics[width=\linewidth]{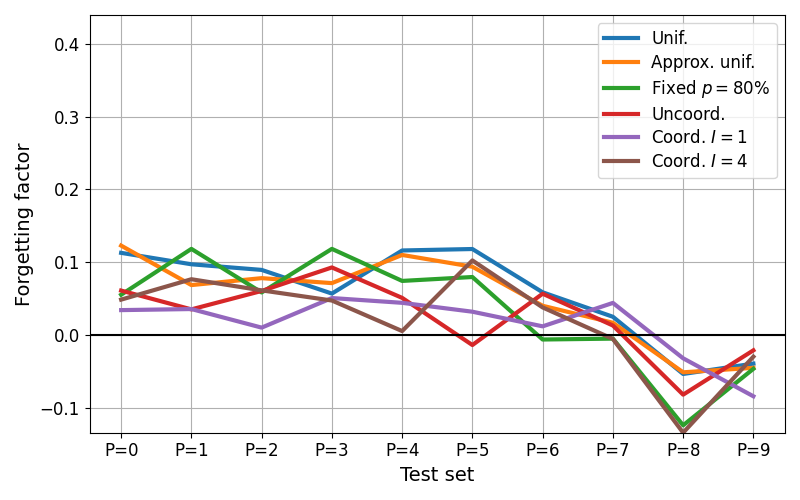}
\caption{$N=100$ forgetting.}
\end{subfigure}
\caption{Period test results for all $N$.}
\label{fig:lines}
\end{figure*}

\end{document}